\documentclass{article}


\usepackage[nonatbib,preprint]{./Styles/neurips_2024}




\usepackage[utf8]{inputenc} 
\usepackage[T1]{fontenc}    
\usepackage{hyperref}       
\usepackage{url}            
\usepackage{booktabs}       
\usepackage{amsfonts}       
\usepackage{nicefrac}       
\usepackage{microtype}      
\usepackage{xcolor}         
\usepackage{amsthm,amssymb,amsthm,amsmath}
\usepackage{soul} 
\usepackage{graphicx}
\usepackage{caption}
\usepackage{subcaption}
\usepackage{wrapfig}
\usepackage{todonotes}

\newtheorem{theorem}{Theorem}[section]
\newtheorem{lemma}[theorem]{Lemma}
\newtheorem{corollary}[theorem]{Corollary}
\newtheorem{prop}[theorem]{Proposition}

\title{Revisiting K-mer Profile for Effective and Scalable Genome Representation Learning}

%

\author{%
  Abdulkadir \c{C}elikkanat \\
  Aalborg University \\
  9000 Aalborg, Denmark \\
  \texttt{abce@cs.aau.dk} \\
  \And
  Andres R. Masegosa \\
  Aalborg University \\
  9000 Aalborg, Denmark \\
  \texttt{arma@cs.aau.dk} \\
  \And
  Thomas D. Nielsen \\
  Aalborg University \\
  9000 Aalborg, Denmark \\
  \texttt{tdn@cs.aau.dk} \\
}

\begin{document}

\maketitle
\begin{abstract}

Obtaining effective representations of DNA sequences is crucial for genome analysis. Metagenomic binning, for instance, relies on genome representations to cluster complex mixtures of DNA fragments from biological samples with the aim of determining their microbial compositions. In this paper, we revisit $k$-mer-based representations of genomes and provide a theoretical analysis of their use in representation learning. Based on the analysis, we propose a lightweight and scalable model for performing metagenomic binning at the genome read level, relying only on the $k$-mer compositions of the DNA fragments. We compare the model to recent genome foundation models and demonstrate that while the models are comparable in performance, the proposed model is significantly more effective in terms of scalability, a crucial aspect for performing metagenomic binning of real-world datasets.    

\end{abstract}
\section{Introduction}\label{sec:introduction}

Microbes influence all aspects of our environment, including human health and the natural environment surrounding us. However, understanding the full impact of the microbes through the complex microbial communities in which they exist, requires insight into the composition and diversity of these communities \cite{pasolliExtensiveUnexploredHuman2019}. 

Metagenomics involves the study of microbial communities at the DNA level. However, sequencing a complex microbial sample using current DNA sequencing technologies \cite{wickTrycyclerConsensusLongread2021} rarely produces full DNA sequences, but rather a mixture of DNA fragments (called \emph{reads}) of the microbes present in the sample. In order to recover the full microbial genomes, a subsequent binning/clustering step is performed, where individual DNA fragments are clustered together according to their genomic origins. This process is also referred to as \emph{metagenomic binning} \cite{pasolliExtensiveUnexploredHuman2019,kangMetaBATAdaptiveBinning2019}. The fragments being clustered during the binning process consist of contiguous DNA sequences (\emph{contigs}) obtained from the reads through a so-called assembly process \cite{yangReviewComputationalTools2021} (contigs are generally longer and less error-prone than the reads). 

Metagenomic binning typically involves comparing and clustering DNA fragments using a distance metric in a suitable genome representation space. State-of-the-art methods for metagenomic binning typically rely on representations that include or build on top of the $k$-mer profiles of the contigs \cite{nissenImprovedMetagenomeBinning2021,lamuriasMetagenomicBinningUsing2023, panSemiBin2SelfsupervisedContrastive2023}. These representations have mostly been studied from an empirical perspective \cite{dubinkinaAssessmentKmerSpectrum2016}, although general theoretical analyses into their representational properties have also been considered \cite{ukkonen1992approximate}. With $k=4$ (i.e., tetra-nucleotides) a $256$-dimensional vector is used to describe the genome, each entry in the vector encoding the frequency of a specific $k$-mer (e.g., ACTG, ATTT) in the genome sequence. The $k$-mer representation of a sequence thus has a fixed size and is independent of sequence length and downstream analysis tasks, providing a computationally efficient representation. 

\begin{wrapfigure}{r}{0.41\textwidth}
  \begin{center}
    \includegraphics[width=0.40\textwidth]{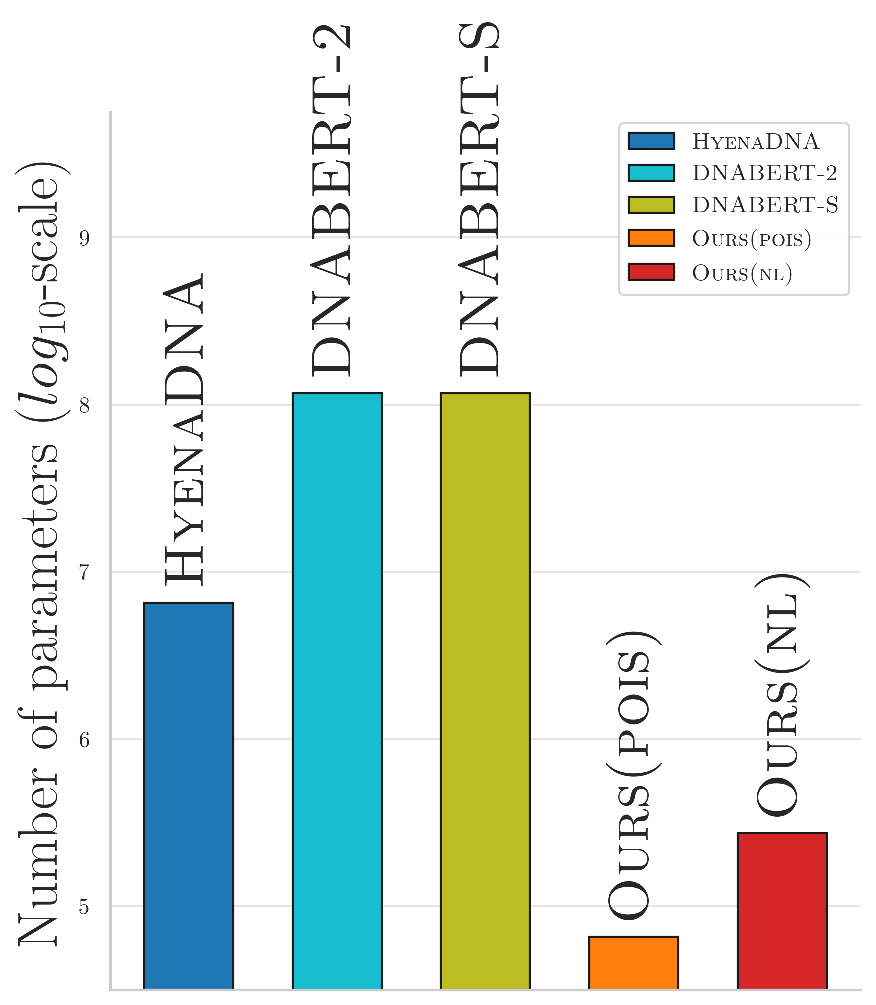}
  \end{center}
  \caption{Number of parameters of the different evaluated models ($\log_{10}$-scale).}
  \label{fig:parameter_comparison}
\end{wrapfigure}

A more recent and popular line of research focuses on using approaches inspired by modern Large Language Models (LLMs) to derive more powerful representations of genome fragments. The goal is to replicate the success of LLMs used in natural language processing for genomic data. These models, known as genome foundation models, have seen numerous versions proposed recently \cite{zhou2024dnabert, nguyen2024hyenadna, zhou2023dnabert2}.

Similarly to popular LLMs, existing genome foundation models utilize next-token prediction or masked-prediction approaches within transformer-based architectures, where the tokens to be predicted are the nucleotides composing the genome fragments. These models, akin to LLMs, enable trainable and contextualized representations that can be defined either in a task-dependent or task-independent manner \cite{zhou2024dnabert}. According to standardized benchmarks, the embeddings derived from these foundation models have the potential to offer substantial improvements over those based on $k$-mers \cite{zhou2024dnabert}. However, these embeddings are also computationally far more intensive, reducing their scalability in light of the massive amounts of data generated by modern sequencing technologies. For instance, \cite{singletonConnectingStructureFunction2021} conducted $k$-mer-based metagenomic binning \cite{wuMaxBinAutomatedBinning2016, kangMetaBATAdaptiveBinning2019} on samples from wastewater treatment plants, encompassing approximately \(1.9 \times 10^{12}\) base pairs and resulting in the recovery of over 3700 high to medium quality clusters/metagenome-assembled genomes (MAGs). 

In this paper, we demonstrate how $k$-mer-based embeddings of genome fragments provide a scalable and lightweight alternative to genome foundation models. We revisit the theoretical basis of $k$-mers and offer a theoretical characterization of the identifiability of DNA fragments based on their $k$-mer profiles. For non-identifiable fragments, we establish lower and upper bounds on their edit distance using the $l_1$ distance between their respective $k$-mer profiles. These findings offer theoretical justifications for $k$-mer-based genome representations and hold potential significance beyond the scope of this study. Building on these theoretical insights, we propose a simple and lightweight model for learning embeddings of genome fragments using their $k$-mer representations.

We empirically assess the proposed embeddings on metagenomic binning tasks and compare their performance with large state-of-the-art genome foundation models. Our findings indicate that, while both sets of models produce comparable quality in terms of the MAGs recovered, the proposed models require significantly fewer computational resources. Figure \ref{fig:parameter_comparison} demonstrates this by showing the number of parameters in our $k$-mer embedding methods compared to those in the state-of-the-art genome foundation models. The $k$-mer-based embedding approaches involve models with several orders of magnitude fewer parameters.

The main contributions of the paper can be summarized as follows:
\begin{itemize}
    \item We provide a theoretical analysis of the $k$-mer space, offering insights into why $k$-mers serve as powerful and informative features for genomic tasks.
    \item We demonstrate that models based on $k$-mers remain viable alternatives to large-scale genome foundation models.
    \item We show that scalable, lightweight models can provide competitive performance in the metagenomic binning task, highlighting their efficiency in handling complex datasets.
\end{itemize}

The datasets and implementation of the proposed architectures can be found at the following address: \url{https://github.com/abdcelikkanat/revisitingkmers}.


\section{Backgroud and Related Works}\label{sec:related_work}


When analyzing the microbial content of biological samples (e.g., soil samples or samples from the intestine), we deal with genomic material originating from multiple different species. For such analyses, the general workflow consists of sequencing the DNA fragments in the sample, producing electrical signals that are subsequently converted into sequences of letters corresponding to the four DNA bases (A, C, T, G). During the sequencing process, the full DNA sequences are fragmented into smaller subsequences (called \emph{reads}), resulting in millions or even billions of reads for each sample. Depending on the sequencing technology being used, reads are typically either classified as short reads ($100$-$150$ bases) or long reads (2k-30k bases), and while long reads are generally preferable, they are also more prone to translation errors.  

Identifying the microbial composition of a sample involves clustering the DNA fragments according to their genomic origin (a process referred to as \emph{metagenmoic binning}).\footnote{In practice, the reads are not clustered directly. Instead overlapping reads are first combined into so-called contigs through an assembly process \cite{kolmogorovMetaFlyeScalableLongread2020}.} Clustering the DNA fragments relies not only on a representation of the genome fragments, but typically also includes information about the read coverages in the sample, reflecting the relative abundances of the individual species that make up the sample\cite{albertsenGenomeSequencesRare2013}.   

For the current paper, the focus is on representations of DNA fragments. Existing works on metagenomic binning typically either rely on predefined features (e.g., \cite{kangMetaBATAdaptiveBinning2019,nissenImprovedMetagenomeBinning2021,lamuriasMetagenomicBinningUsing2023}) or, more recently, on genome foundation models (e.g., \cite{dalla2023nucleotide,nguyen2024hyenadna,zhou2024dnabert}) for representing the DNA fragments. Predefined features for metagenomic binning include the $k$-mer frequencies of the genome (e.g., for $k=4$, the genome is represented by a 256-dimensional vector), which have shown to exhibit a degree of robustness across different areas of the same genome \cite{burgeUnderrepresentationShortOligonucleotides1992}. These $k$-mer vectors/profiles are either used as direct representations of the DNA fragments as in, e.g., \textsc{METABAT2}\cite{kangMetaBATAdaptiveBinning2019}, \textsc{VAMP} \cite{nissenImprovedMetagenomeBinning2021}, and \textsc{SemiBin2} \cite{panSemiBin2SelfsupervisedContrastive2023} or is used for learning $k$-mer-based embeddings \cite{ng2017dna2vec,renKmer2vecNovelMethod2022} in the spirit of \textsc{word2vec} \cite{mikolovDistributedRepresentationsWords2013}.


More recently, genome foundation models have also been proposed, which include \textsc{DNABERT} \cite{ji2021dnabert}, \textsc{DNABERT-2}  \cite{zhou2023dnabert2}, \textsc{HyenaDNA} \cite{nguyen2024hyenadna}, and \textsc{DNABERT-S} \cite{zhou2024dnabert}. These foundation models provide embedding of the raw DNA fragments as represented by the sequences of four nucleotide letters. 


In this work, we show that although the contextualized embeddings produced by foundation models provide a strong basis for downstream genome analysis tasks, the complexity of the models also makes them less scalable to the vast amounts of data currently being generated by the state-of-the-art sequencing technologies. Consequently, we consider more lightweight models and explore the use and theoretical basis for models relying on $k$-mer representations of DNA fragments.



\section{$k$-mer embeddings}\label{sec:proposed_model}
As we discussed in the previous section, the metagenomic binning problem aims to cluster the sequences (i.e. reads) according to their respective genomes. For the sake of the argument, we denote with $\ell:  \mathcal{R} \rightarrow [K]$ the function that maps each read to the (index) ground-truth genome from which the read originates. Here, we use $\mathcal{R}$ to denote the set of reads where each read is supposed to originate from one of the $K$ genomes.

Performing clustering directly on the reads has been shown to provide very poor performance \cite{kyrgyzovBinningUnassembledShort2020}, mainly because reads lie in a complex manifold in a very high dimensional space. A well-tested approach is to instead \textit{embed} each of the objects in a lower dimensional space, known as the \textit{embedding or latent space}, whose structure is much closer to Euclidean space, where clustering is far simpler to perform. More formally, we characterize the problem as follows:

\textbf{Problem Definition.} Let $\mathcal{R} \subset \Sigma^+$ be a finite set of reads with a genome mapping function $\ell$ where $\Sigma=\{A,C,T,G\}$. For a given threshold value $\gamma\in\mathbb{R}^+$, the objective is to learn an embedding function $\mathcal{E} :\mathcal{R} \rightarrow \mathbb{R}^d$ that embeds reads into a low-dimensional metric space $(\mathsf{X},d_{\mathsf{X}})$, usually a Euclidean space, such that $d_{\mathsf{X}}(\mathcal{E}(\mathbf{r}), \mathcal{E}(\mathbf{q})) \leq \gamma$ if and only if $\ell(\mathbf{r}) = \ell(\mathbf{q})$ for all reads $\mathbf{r},\mathbf{q}\in\mathcal{R}$ where $d \ll |\mathcal{R}|$.

Once we establish an embedding function like the one described earlier, it becomes, in principle, straightforward to employ a simple clustering algorithm to bin or group the reads. However, for this purpose, the embedding function \({\cal E}\) must be capable of distinguishing the intrinsic features of the various genomes, producing similar latent representations for reads that originate from the same genomes. In the following subsection, we will discuss novel theoretical arguments that help to better understand why $k$-mer profiles are a powerful representation of reads. 

\subsection{$\mathbf{k}$-mers are a powerful representation of reads}
 
The use of $k$-mer profiles to represent reads helps overcome several challenges encountered during the clustering or binning of reads. These challenges include (i) the variation in read lengths in practical scenarios, such as in genome sequencing where read lengths can differ significantly, (ii) ambiguity in read direction, which is a result of reads lacking inherent directionality, complicating the analysis, and (iii) the equivalence of a DNA sequence to its complementary sequence because DNA is composed of two complementary strands, meaning the information from one strand corresponds to that from its complementary strand. 

An initial important question concerns the \emph{identifiability} of reads, which examines to what extent different reads share the same $k$-mer profiles. We will call the reads that can be uniquely reconstructed from their given $k$-mer profile as \textbf{identifiable} and this is crucial because if many reads possess the same $k$-mer profile, they will invariably be grouped into the same cluster, potentially leading to the loss of significant information. In this context, Ukkonen et al. \cite{ukkonen1992approximate} conjectured in their study on the string matching problem that two sequences sharing the same $k$-mer profile could be transformed into one another through two specific operations, and this claim was later proved by Pevzner \cite{pevzner1995dna}. However, these specified operations fail to consider cases where sequences include repeated overlapping occurrences of the same $k$-mer. 

In this regard, in Theorem \ref{theorem:3conditions}, we demonstrate that a read can be perfectly reconstructed from its $k$-mer profile under certain conditions, which become less restrictive with larger $k$ values. In the following theorem, we use $r_{i}\cdots r_{j}$, with $i<j$, to denote any subsequence of a read $r$ that includes consecutive nucleotides from position $i$ to position $j$.

\begin{theorem}\label{theorem:3conditions}
Let $\mathbf{r}$ be a read of length $\ell$. There exists no other distinct read having the same $k$-mer profile if and only if it does not satisfy any of the following conditions:
\begin{enumerate}
    \item $r_{1}\cdots r_{k-1} = r_{\ell - k -2}\cdots r_{\ell }$ and $r_i \not= r_1$ for some $1 < i < \ell-k-2$.
    \item $r_i\cdots r_{i+k-2} = r_j\cdots r_{j+k-2}$ and $r_g\cdots r_{g+k-2} = r_h\cdots r_{h+k-2}$ for some indices $1 \leq i < g < j < h \leq \ell - k +2$ where $r_{i+k-1}\cdots r_{g-1} \not= r_{j+k-1}\cdots r_{h-1}$.
    \item $r_i\cdots r_{i+k-2} = r_j\cdots r_{j+k-2} = r_h\cdots r_{h+k-2}$  for some indices $1 \leq i < j < h \leq \ell-k+2$ where $r_{i+k-1}\cdots r_{j-1} \not= r_{j+k-1}\cdots r_{h-1}$.
\end{enumerate}
\end{theorem}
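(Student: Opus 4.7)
The plan is to recast the statement in terms of Eulerian walks in a de Bruijn--style directed multigraph. Given $\mathbf{r}$, define $G_{\mathbf{r}}$ whose vertices are the distinct $(k{-}1)$-mers $r_i\cdots r_{i+k-2}$ occurring in $\mathbf{r}$ and whose edges are its $k$-mers: each $r_i\cdots r_{i+k-1}$ contributes a directed edge from $r_i\cdots r_{i+k-2}$ to $r_{i+1}\cdots r_{i+k-1}$. A read with the same $k$-mer profile as $\mathbf{r}$ corresponds precisely to an Eulerian walk in $G_{\mathbf{r}}$ using each edge with its prescribed multiplicity; the start and end vertices are pinned by the in/out-degree imbalance, except when they coincide and we have an Eulerian circuit. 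Identifiability of $\mathbf{r}$ is therefore equivalent to uniqueness of this Eulerian walk.

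For the forward direction I would exhibit, for each condition, a distinct Eulerian walk. Under condition (1), the $(k{-}1)$-prefix equals the $(k{-}1)$-suffix, so the walk is a circuit; the hypothesis that some interior character differs from $r_1$ forces at least one intermediate vertex on the circuit to have out-degree $\geq 2$, allowing a cyclic rotation that produces a different linear sequence. Under condition (2), two distinct vertices $u$ and $v$ are each visited twice with interleaved visits, creating a ``figure-eight'' in $G_{\mathbf{r}}$ along which the two loops can be swapped, and the hypothesis $r_{i+k-1}\cdots r_{g-1}\neq r_{j+k-1}\cdots r_{h-1}$ guarantees that the swap yields a distinct sequence. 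Under condition (3), one vertex is visited three times and the two excursions leaving it from its first two visits can be reordered, with the stated inequality again ensuring that the reordering is genuinely different.

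For the converse, suppose a distinct read $\mathbf{q}$ shares the $k$-mer profile of $\mathbf{r}$, and let $t$ be the first position at which $\mathbf{q}$ and $\mathbf{r}$ disagree. Then just before $t$ both walks occupy the same vertex $v$ and choose different outgoing edges, so $v$ appears at two positions of $\mathbf{r}$. Since both walks exhaust the same edge multiset and terminate at the same sink, they must later reconverge at some vertex $w$. I would case-split on $w$: if $w=v$ and $v$ is also the common start/end of the full walk, condition (1) applies; if $w=v$ but $v$ is interior, then $v$ occurs a third time, giving condition (3); if $w\neq v$, the two distinct incoming edges into $w$ used by the walks correspond to a second repeated $(k{-}1)$-mer whose occurrences interleave with those of $v$, yielding condition (2).

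The main obstacle will be the converse, and specifically arguing that these three patterns exhaust all local structures causing non-uniqueness. Two Eulerian walks can in general diverge and reconverge many times, so one must establish that the \emph{first} such divergence is already governed by one of the simple configurations above. I expect the cleanest route is to pick $\mathbf{q}$ minimizing the total length of the first divergent segment; this choice forces that segment to contain no nested swap or further branching, so the minimal disagreement must arise from exactly one of the three stated patterns.
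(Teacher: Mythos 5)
Your de Bruijn/Eulerian-walk framing is a genuinely different route from the paper: the paper argues directly on strings, first proving a prefix-agreement lemma (if every $(k-1)$-mer up to some position occurs only once, any read with the same $k$-mer profile must agree with $\mathbf{r}$ on that prefix) and then handling the repeated $(k-1)$-mers recursively, using conditions (2) and (3) to force the intervening blocks to coincide. Your framing is essentially the classical Pevzner route (rotations of Eulerian circuits and swaps of interleaved loops), and the forward direction of your sketch matches the paper's constructions in spirit: rotation for condition (1), block transposition for (2), reordering of two excursions at a thrice-visited vertex for (3). One local slip there: for condition (1), ``some interior character differs from $r_1$ forces a vertex of out-degree $\geq 2$, allowing a distinct rotation'' is not a valid chain --- a constant read has a single vertex of large out-degree yet admits no distinct rotation; the correct argument is simply that a non-constant cyclic word has a rotation differing from itself, which is exactly what the clause $r_i \neq r_1$ supplies.

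The genuine gap is in the converse, and you have flagged it yourself: the claim that the \emph{first} divergence of $\mathbf{q}$ from $\mathbf{r}$ must realize one of the three configurations is asserted, not proved, and several of the asserted implications are not obviously true as stated. ``Both walks terminate at the same sink'' fails in the balanced (circuit) case, which is precisely the case condition (1) is meant to capture; ``$w=v$ interior implies $v$ occurs a third time'' does not follow, since the number of occurrences of a $(k-1)$-mer in $\mathbf{q}$ need not equal its number in $\mathbf{r}$ (they can differ depending on the terminal $(k-1)$-mers), so visits of $\mathbf{q}$ to $v$ do not directly transfer to occurrences in $\mathbf{r}$; and in the case $w\neq v$ you still owe the interleaving pattern $i<g<j<h$ \emph{and} the inequality of the two intermediate substrings required by condition (2) --- if those substrings were equal, the swap would not change the read and the disagreement of $\mathbf{q}$ must be traced elsewhere, which is exactly the recursive bookkeeping the paper performs via its Lemma. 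Your proposed fix (choose $\mathbf{q}$ minimizing the length of the first divergent segment) is a plausible strategy for exactly this step, but it is left as a plan; until that case analysis is carried out, the ``only if'' half of the theorem --- which is its substantive content --- is not established.
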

\begin{proof}
The proof is omitted due to the limitation on the number of pages. Please see the appendix.
\end{proof}

It is important to note that it is always possible to find an optimal \(k^*\) value that ensures that each read does not meet any of the previously mentioned conditions and, in consequence, becomes identifiable. Clearly, when $k$ is equal to the length of the reads, all reads become identifiable. However, as $k$ increases, the $k$-mer profiles become more prone to errors, and it will be harder to identify the underlying patterns in the reads (in the extreme case scenario where $k$ equals the length of the reads, each read will correspond to a unique $k$-mer). Thus, using large values of $k$ is impractical. 
However, the results show that even with smaller $k$ values, identifiability could still be potentially achieved if not by all but by some of the reads. We hypothesize that the effectiveness of $k$-mer representations stems, in part, from this fact.

Another perspective to consider is the extent to which two similar reads share a similar $k$-mer profile. The identifiability approach from the previous result addresses this issue by establishing that if two reads have identical profiles, then the reads themselves are identical. However, identifiability is limited by the conditions outlined in Theorem \ref{theorem:3conditions}, which may not always be met. The observation given in Proposition \ref{prop:lipschitz_equivalance} provides a broader and more applicable finding. 

It demonstrates that if two $k$-mers are similar according to the $l_1$  distance, then the corresponding reads are also similar according to the Hamming distance in the read space. It is important to note that both the $l_1$  and Hamming distances are natural measures for evaluating similarity in this context. Technically speaking, we show that the $l_1$ distance between $k$-mer profiles can be upper and lower bounded by the Hamming distance between the corresponding identifiable reads, so they are Lipschitz equivalent spaces. In the next result, $c:\mathcal{R} \times \Sigma^k \rightarrow \mathbb{N}$ denotes the function showing the number of occurrences of a given $k$-mer in a given read. For simplicity, we use the notation,
$c_{\mathbf{r}}$, to indicate the $k$-mer profile vector of a read $r$  whose entries are ordered lexicographically. In other words, $c_{\mathbf{r}}(\mathbf{x})$ is the number of appearances of $\mathbf{x}\in \Sigma^k$ in read $\mathbf{r}\in \mathcal{R}$.

\begin{prop}\label{prop:lipschitz_equivalance}
Let $M_1 = (\aleph_{\ell}, d_{\mathcal{H}})$ and $M_2 = (\mathbb{N}^{|\Sigma^k|}, \|\cdot\|_1)$ be the metric spaces denoting the set of identifiable reads and their corresponding $k$-mer profiles equipped with edit and $\ell_1$ distances, respectively. The $k$-mer profile function, $c :M_1 \rightarrow M_2$, mapping given any read, $\mathbf{r}$, to its corresponding $k$-mer profile, $c_{\mathbf{r}}:=c(\mathbf{r})$, is a Lipschitz equivalence, i.e. it satisfies
\begin{align}
\forall \mathbf{r},\mathbf{q}\in\Sigma^{\ell} \ \ \alpha_l d_{\mathcal{H}}( \mathbf{r}, \mathbf{q} ) \leq  \| c_\mathbf{r} - c_\mathbf{q} \|_1 \leq \alpha_u d_{\mathcal{H}}( \mathbf{r}, \mathbf{q} ) 
\end{align}
for $\alpha_l = 1 / \ell$ and $\alpha_u = k |\Sigma|^k$, so $M_1$ and $M_2$ are Lipschitz equivalent.
\end{prop}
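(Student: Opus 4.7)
The plan is to establish the two inequalities separately. The lower bound is a direct consequence of identifiability combined with the integrality of $k$-mer counts, whereas the upper bound follows from a simple locality argument about how many $k$-mer positions can be touched by a single position-wise disagreement.

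For the lower bound, if $\mathbf{r}=\mathbf{q}$ there is nothing to prove. Otherwise, because both $\mathbf{r}$ and $\mathbf{q}$ lie in the identifiable set $\aleph_{\ell}$, the characterization of identifiability given in Theorem~\ref{theorem:3conditions} implies that their $k$-mer profiles cannot coincide, so $c_{\mathbf{r}}\neq c_{\mathbf{q}}$. Since the entries of $c_{\mathbf{r}}$ and $c_{\mathbf{q}}$ are non-negative integers, any coordinate in which they differ contributes at least one to the $l_1$ distance, and hence $\|c_{\mathbf{r}}-c_{\mathbf{q}}\|_1\geq 1$. On the other hand, the Hamming distance between two strings of common length $\ell$ is at most $\ell$, so we obtain $\|c_{\mathbf{r}}-c_{\mathbf{q}}\|_1\geq 1\geq d_{\mathcal{H}}(\mathbf{r},\mathbf{q})/\ell$, which is exactly the lower bound with $\alpha_{l}=1/\ell$.

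For the upper bound, I would use the fact that each position $i$ of a length-$\ell$ read participates in at most $k$ of the sliding-window $k$-mers (those with starting index in $\{i-k+1,\dots,i\}\cap\{1,\dots,\ell-k+1\}$). Consequently, if $\mathbf{r}$ and $\mathbf{q}$ disagree in exactly $d:=d_{\mathcal{H}}(\mathbf{r},\mathbf{q})$ positions, there are at most $kd$ starting indices at which the $k$-mer of $\mathbf{r}$ can possibly differ from the $k$-mer of $\mathbf{q}$; at every other starting index the identical $k$-mer is read from both sequences and cancels out across the two profiles. Fixing any $\mathbf{x}\in\Sigma^{k}$, the quantity $|c_{\mathbf{r}}(\mathbf{x})-c_{\mathbf{q}}(\mathbf{x})|$ is therefore bounded by the number of starting indices at which $\mathbf{x}$ appears in exactly one of the two reads, and this number is itself at most $kd$. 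Summing this coordinate-wise bound over the $|\Sigma|^{k}$ candidate $k$-mers yields $\|c_{\mathbf{r}}-c_{\mathbf{q}}\|_1\leq k|\Sigma|^{k}\,d_{\mathcal{H}}(\mathbf{r},\mathbf{q})$, giving $\alpha_{u}=k|\Sigma|^{k}$.

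Neither direction is deep, so the main obstacle is bookkeeping rather than a clever construction: the ``at most $k$ $k$-mers per position'' statement has to be stated carefully near the two ends of the read (positions within $k-1$ of the boundary lie in strictly fewer windows), but this only tightens the count. It is also worth noting that identifiability is invoked only for the lower bound, where it excludes the degenerate situation $c_{\mathbf{r}}=c_{\mathbf{q}}$ with $\mathbf{r}\neq\mathbf{q}$; the upper bound itself is a property of the sliding-window construction and would hold for any pair of length-$\ell$ sequences regardless of whether they are identifiable.
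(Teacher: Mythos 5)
Your proof is correct and follows essentially the same route as the paper's: the upper bound via the observation that each mismatched position affects at most $k$ sliding windows, summed over the $|\Sigma|^k$ coordinates, and the lower bound via identifiability forcing $c_{\mathbf{r}}\neq c_{\mathbf{q}}$, integrality giving $\|c_{\mathbf{r}}-c_{\mathbf{q}}\|_1\geq 1$, and $d_{\mathcal{H}}(\mathbf{r},\mathbf{q})\leq\ell$. Your remark that identifiability is needed only for the lower bound is accurate and consistent with the paper's argument.
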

\begin{proof}
The proof is omitted due to the limitation on the number of pages. Please see the appendix.
\end{proof}

Unfortunately, when one runs a clustering algorithm directly on top of the \textit{raw} $k$-mer profiles, the results are not competitive as demonstrated in numerous studies, see, e.g., \cite{zhou2023dnabert2}. Instead, we will consider methods for learning embeddings of $k$-mer profiles and, more generally, genome sequences.

\subsection{Linear read embeddings}\label{subsec:linear:embedding}

Below we consider two simple genome sequence models defining the embedding of a read as a linear combination of the $k$-mer representations. The first model ($k$-mer profile) does not involve any learning procedure and it is simply defined by the $k$-mer counts of the read, whereas the second model (Poisson model) expands on the $k$-mer profile model by explicitly representing the dependencies between $k$-mers. Both models will serve as baselines for more complex models.

\textbf{$k$-mer profile:} First, consider the definition of $k$-mer profiles:
\begin{align}
\mathcal{E}_{\textsc{kmer}}(\mathbf{r}) := \sum_{\mathbf{x} \in \Sigma^{k}} c_{\mathbf{r}}(\mathbf{x}) \mathbf{z}_{\mathbf{x}}\label{eq:linear_combination}
\end{align}
where $\mathbf{z}_{\mathbf{x}}$ represents the canonical basis vector for the $k$-mer $\mathbf{x} \in \Sigma^{k}$, i.e. ($\mathbf{z}_{\mathbf{x}} \in \{ (u_1,\ldots,u_{|\Sigma^k|})\in \{0,1\}^{|\Sigma^k|} : \sum_{i}u_i = 1  \}$ ).

As previously mentioned, our primary objective is to represent reads in a lower-dimensional space, where their relative positions in a latent space reflect their underlying similarities. By Equation \ref{eq:linear_combination}, the $k$-mer profile of a read can also be considered as its embedding vector, but its entries are not independent. In other words, if we change one letter in a read, it can affect at most $k$ different $k$-mer counts. In this regard, it might be possible to learn better representations than simple $k$-mer profiles. 

\textbf{Poisson model:} For a given read $\mathbf{r} := (r_1\ldots r_{\ell})$, it is easy to see that the consecutive $k$-mers in the read are not independent. For instance, the $k$-mers located at position indices $i$ and $i+1$, (i.e. $\mathbf{x} := r_i\cdots r_{i+k-1}$ and $\mathbf{y} := r_{i+1}\cdots r_{i+k}$), share the same substring $r_{i+1}\cdots r_{i+k-1}$, which highlights the inherent dependencies between $k$-mers. Hence, we will learn the representation, $\mathbf{z}_{\mathbf{x}}$, of each $k$-mer, $\mathbf{x}$, instead of using the canonical basis vectors. We propose to model the co-occurrence frequency of specific $k$-mer pairs within a fixed window size, $\omega$, using the Poisson distribution. In other words, $o_{\mathbf{x},\mathbf{y}} \sim Pois(\lambda_{\mathbf{x},\mathbf{y}})$, where $o_{\mathbf{x},\mathbf{y}}$ indicates the number of co-appearances of $k$-mers, $\mathbf{x} := r_{i}\cdots r_{i+k-1}$ and $\mathbf{y} := r_{j}\cdots r_{j+k-1}$ for $|i-j| \leq \omega$. 

In order to achieve close representations for highly correlated $k$-mers, and distant embeddings for the dissimilar ones, we define the rate of the Poisson distribution as follows:
\begin{align}
\lambda_{\mathbf{x},\mathbf{y}} := \exp\left( -\| \mathbf{z}_{\mathbf{x}} - \mathbf{z}_{\mathbf{y}} \|^2 \right).
\end{align}

The loss function used for training the embeddings with respect to the reads is simply defined in terms of the Poisson distribution model of the $k$-mer pairs:
\begin{align}\label{eq:poisson_loss}
\mathcal{L}_{\textsc{pois}}\Big( \{o_{\mathbf{x},\mathbf{y}}\}_{\mathbf{x},\mathbf{y} \in \Sigma^k}| \{\mathbf{z}_{\mathbf{x}}\}_{\mathbf{x}\in\Sigma^k} \Big) \!:=\! \frac{1}{2}\sum_{\mathbf{x}\in \Sigma^k}\sum_{\mathbf{y}\not=\mathbf{x} \in \Sigma^k}  o_{\mathbf{x},\mathbf{y}}\|\mathbf{z}_{\mathbf{x}} - \mathbf{z}_{\mathbf{y}}\|^2 + \exp\left( -\|\mathbf{z}_{\mathbf{x}} - \mathbf{z}_{\mathbf{y}}\|^2 \right) 
\end{align}
where $o_{\mathbf{x},\mathbf{y}}$ is the average number of co-occurrences of $k$-mers $\mathbf{x}$ and $\mathbf{y}$ per read within a window of size $\omega$ in the dataset. Once we have obtained the $k$-mer embeddings, we can obtain the read embeddings by combining the $k$-mer embeddings with their respective occurrence counts, as given in Equation \ref{eq:linear_combination}. More specifically, the embedding of read, $\mathbf{r}$, is given by 
\begin{align}
\mathcal{E}_{\textsc{pois}}(\mathbf{r}) = \frac{1}{\sum_{\mathbf{x} \in \Sigma^k}c_{\mathbf{r}}(\mathbf{x})} \sum_{\mathbf{x} \in \Sigma^{k}} c_{\mathbf{r}}(\mathbf{x}) \mathbf{z}_{\mathbf{x}}
\end{align}
where $\{\mathbf{z}_{\mathbf{x}} \}_{\mathbf{x}\in\Sigma^k}$ corresponds to the learned $k$-mer embeddings of Eq.\ \ref{eq:poisson_loss}.

\subsection{Non-linear read embeddings}\label{subsec:nonlinear:embedding}

In the experimental section, we will show that the linear $\mathbf{k}$-mer embeddings described previously outperform raw $\mathbf{k}$-mer profiles in metagenomic binning tasks. However, the literature on machine learning frequently emphasizes that non-linear embeddings typically provide superior results when the data, such as genomic sequences in this context, exist in complex high-dimensional manifold spaces \cite{9768031}. 

We design a simple and efficient neural network architecture utilizing self-supervised contrastive learning. Our model consists of two linear layers, and the first one takes $k$-mer profile features as input, projecting them into a $512$-dimensional space. A sigmoid activation function is applied, followed by a batch normalization operation and a dropout with a ratio of $0.2$. The second layer maps them into the final $256$-dimensional embedding space.

Our approach is inspired by previous methods in the context of metagenomic binning at the contig level \cite{panSemiBin2SelfsupervisedContrastive2023}. The primary aim here is not to introduce a novel methodology for learning embeddings but to demonstrate that simple non-linear embeddings, built upon $k$-mer representations, can match the effectiveness of existing state-of-the-art genomic foundation models for metagenomic binning tasks while being significantly more scalable.

\begin{figure}
    \centering
    \includegraphics[width=\textwidth]{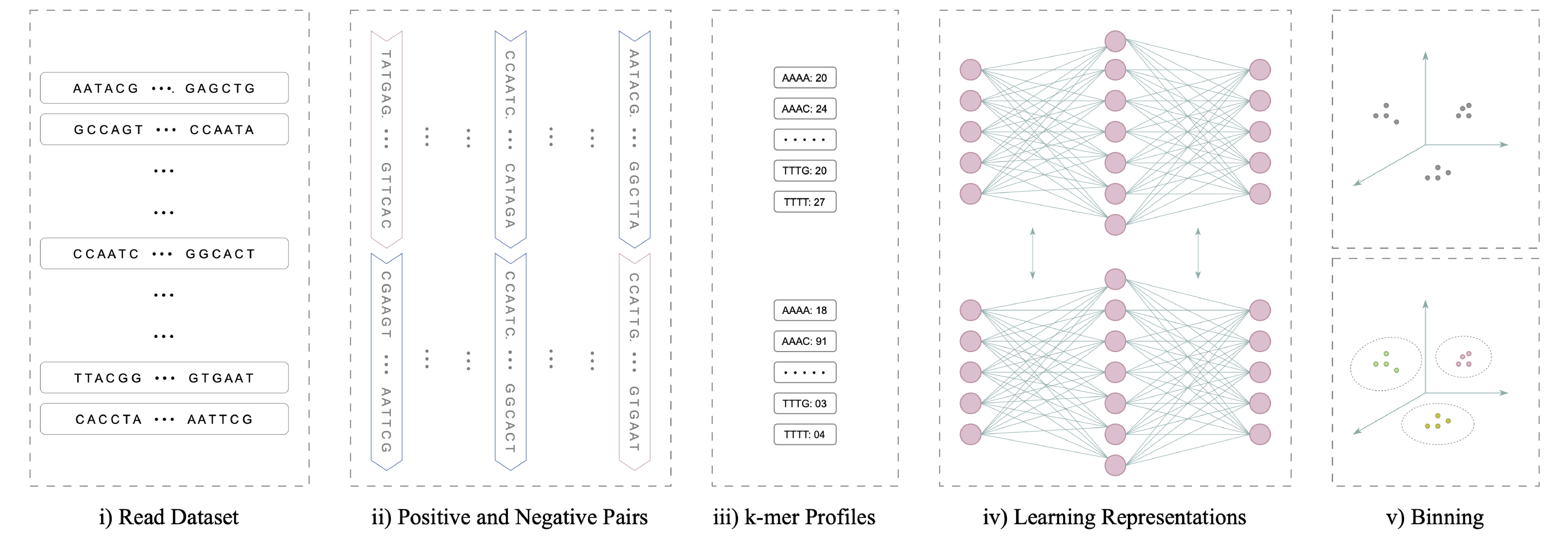}
    \caption{Illustration of the non-linear $k$-mer embedding approach described in Section \ref{subsec:nonlinear:embedding}.}
    \label{fig:model}
\end{figure}

The methodology is graphically depicted in Figure \ref{fig:model}. Initially, we create positive and negative pairs of read segments using the dataset provided. To create a positive pair, we split an existing read into two equal-sized segments. Conversely, a negative pair is formed by combining two segments originating from the splitting of two distinct reads chosen at random. Subsequently, for each pair, we calculate the $k$-mer profile for the two segments in the pair. These two $k$-mer profiles are then input into the same neural network, which maps them to two vectors within the embedding space. The loss function evaluates the quality of these two embeddings; it penalizes distant embeddings in positive pairs and close embeddings in negative pairs. The learning process involves optimizing the neural network's weights to minimize this loss function across numerous positive and negative samples. 

Let $\mathcal{R} = \{ \mathbf{r}_i\}_{i = 1}^N$ be the set of reads and  $\mathbf{r}^{l}_i$ and $\mathbf{r}_i^{r}$ are the segments indicating the left and right halves of read $\mathbf{r}_i \in \mathcal{R}$. As described above, we use these segments to construct the positive and negative pair samples required for the self-supervised contrastive learning procedure. Let $\{ (\mathbf{r}_i^{*}, \mathbf{r}_j^{*}, y_{ij}) \}_{(i,j)\in\mathcal{I}}$ be the set of triplets where $\mathcal{I} \subseteq [N]\times [N]$ is the index set for the pairs, and the symbol, $y_{ij} \in \{0, 1\}$ is used to denote whether $(\mathbf{r}_i^{*}, \mathbf{r}_j^{*})$ is a positive sample (i.e. $+1$), or negative (i.e. $0$). The positive label means that the pair corresponds to the left and right segments of the same read, while a negative label indicates they are segments from different reads. Then, we define the loss function as follows:
\begin{align}\label{eq:nonlinear_loss}
\mathcal{L}_{\textsc{nl}}\Big(\{y_{ij}\}_{(i,j)\in\mathcal{I}} | \Omega \Big) := -\frac{1}{|\mathcal{I}|}\sum_{(i,j) \in\mathcal{I} }y_{ij} \log{p_{ij}} + (1-y_{ij})\log(1-p_{ij})
\end{align}
\noindent where $\Omega$ indicates the all trainable weights of the neural network. 
Here, the success probability, $p_{ij}$, corresponds to the case of a positive pair, occurring when $y_{ij} = +1$, and computed as:
\[ p_{ij} = \exp\left( -\| \mathcal{E}_{\textsc{nl}}(\mathbf{r}_i) -  \mathcal{E}_{\textsc{nl}}(\mathbf{r}_j) \|^2 \right)\]
where $\mathcal{E}_{\textsc{nl}}(\mathbf{r}_i^{*})$ denotes the output embedding of the neural network for segment $\mathbf{r}_i^{*}$.

Given a large number of distinct genomes (i.e., clusters), it is highly likely that \textit{negative} pairs consist of segments from different genomes, as they originate from different reads. Conversely, \textit{positive} pairs are composed of segments from the same genome, as they originate from the same read. Consequently, the neural network's embedding function will learn to produce similar embeddings for $k$-mer profiles belonging to the same genome and different embeddings for $k$-mer profiles belonging to different genomes. As demonstrated in the next section, this straightforward approach built on $k$-mer profiles competes effectively with state-of-the-art genomic foundation models, which involve neural networks with several orders of magnitude more parameters.

\section{Experiments}\label{sec:experiments}
In this section, we will provide the details regarding the experiments, datasets, and baseline approaches that we consider to assess the performance of the proposed linear and non-linear models.

\textbf{Datasets}. We utilize the same publicly available datasets used to benchmark the genome foundation models \cite{zhou2024dnabert}. The training set consists of $2$ million pairs of non-overlapping DNA sequences, each $10,000$ bases in length, constructed by sampling from the dataset, including $17,636$ viral, $5,011$ fungal, and $6,402$ distinct bacterial genomes from GenBank \cite{benson2012genbank}. For model evaluation, we use six datasets derived from the CAMI2 challenge data \cite{meyer2022critical}, representing marine and plant-associated environments and including fungal genomes. These datasets propose realistic and complex reads, making them one of the most utilized benchmarks for the metagenomics binning task. Additionally, the synthetic datasets consist of randomly sampled sequences from the fungi and viral reference genomes, excluding any samples from the training data \cite{zhou2024dnabert}. 

\begin{figure}
\includegraphics[width=\textwidth]{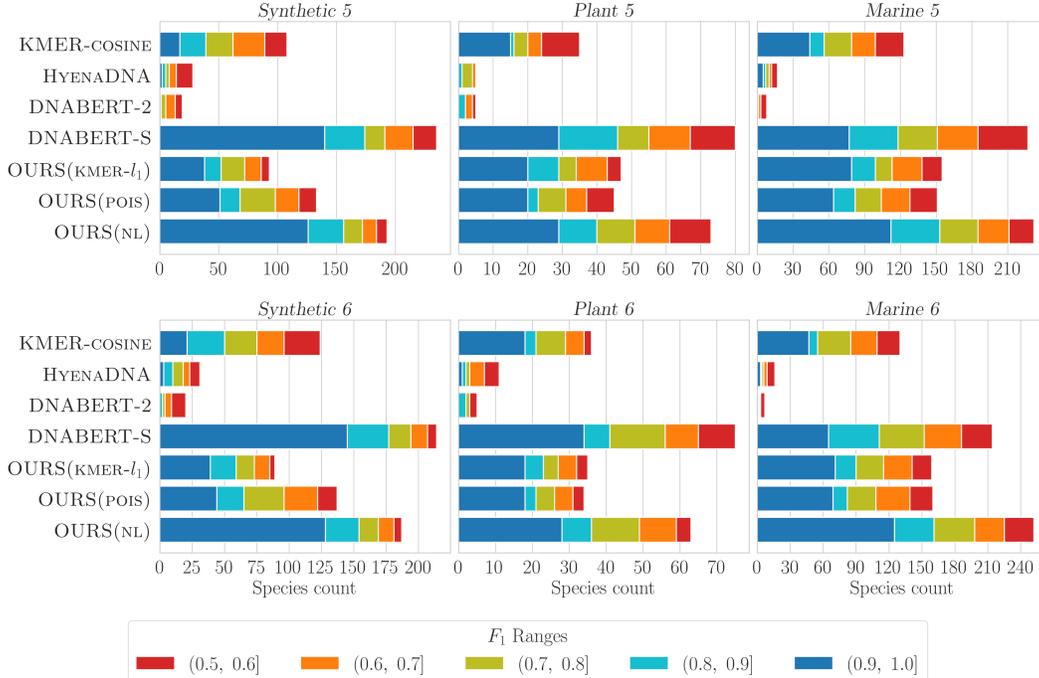}
\caption{Evaluation of the models on multiple datasets for the metagenomic binning task. Each color represents the number of clusters within a specific $F_1$ score range, and the number of clusters with the highest quality is highlighted in dark blue.}
\label{fig:all_results}
\end{figure}

\textbf{Baselines and proposed models.} For our experiments, we have employed the recent genome foundation models to assess the performance of our approach. (i) \textsc{kmer} is one of the most widely used baselines, and it also serves as a fundamental component of our model. The DNA sequences are represented as $4$-mer profiles \textit{Tetranucleotide Frequencies} given in Eq. \ref{eq:linear_combination} so each read is represented by a $256$-dimensional vector.  (ii) \textsc{HyenaDNA} \cite{nguyen2024hyenadna} is a genome foundation model, \textsc{HyenaDNA}, pre-trained on the Human Reference Genome \cite{gregory2006dna} with context lengths up to $10^6$ tokens at single nucleotide resolution. (iii) \textsc{DNABERT-2} \cite{zhou2023dnabert2} is another foundation model pre-trained on multi-species genomes, and it proposes Byte Pair Encoding (BPE) for DNA language modeling to address the computational inefficiencies related to $k$-mer tokenization. It also incorporates various techniques, such as Attention with Linear Biases (ALiBi) and Low-Rank Adaptation (LoRA), to address the limitations of current DNA language models.
(iv) \textsc{DNEBERT-S} \cite{zhou2024dnabert} aims to generate effective species-aware DNA representations and relies on the proposed Manifold Instance Mixup (MI-Mix) and Curriculum Contrastive Learning approaches. It relies on the pre-trained \textsc{DNABERT-2} model as the initial point of contrastive training. (v) \textsc{Ours(kmer-$l_1$)} is similar to the \textsc{kmer} approach presented in (i), but it uses $l_1$ distance instead of the cosine-similarity distance so we will also name the first one as \textsc{Ours(kmer-cosine)} to distinguish both models. (vii) \textsc{Ours(pois)} refers to the Poisson approach described in Section \ref{subsec:linear:embedding}. (vi) \textsc{Ours(nl)} refers to the non-linear $k$-mer embedding approach based on self-supervised constractive learning described in Section \ref{subsec:nonlinear:embedding}.

\textbf{Parameter Settings.} Our proposed models were trained on a cluster equipped with various NVIDIA GPU models. For the optimization of our models, we employed the Adam optimizer with a learning rate of $10^{-3}$. We used smaller subsets of the dataset for training our models, and we sampled $10^4$ reads for \textsc{Ours(pois)} and $10^6$ sequences for \textsc{Ours(nl)}. The \textsc{Ours(\textsc{nl})} model was trained for $300$ epochs with a mini-batch size of $10^4$, while \textsc{Ours(pois)} was trained for $1000$ epochs using full-batch updates and window size of $4$. Since we incorporated the contrastive learning strategy for \textsc{Ours(nl)}, we randomly sampled $200$ read halves to form the negative instances for each positive sample (one half of a read). We have set $k=4$ and the final embedding dimension to $256$ for our all models. Following the experimental set-up of \cite{zhou2024dnabert}, we used the publicly available pre-trained versions of the baseline genome foundation models in the Hugging Face platform.

\subsection{Metagenomics Binning Task}
We assess the performance of the models based on the number of detected species/clusters and these clusters are classified into five different quality levels in terms of their $F_1$ scores. Our methods and each baseline approach generate embeddings for the reads, and we cluster these read representations by following the work \cite{zhou2024dnabert} with the modified K-Medoid algorithm. We refer readers to the cited work for further details on this approach. In line with standard practices in the field, we use cosine similarity for the baseline methods to measure the similarity between read embeddings required for the K-Medoid algorithm. For \textsc{Ours(kmer-$l_1$)}, we employed the $\exp( -dist(\cdot,\cdot) )$ function with $l_1$ distance, while Euclidean distance was used for \textsc{Ours(pois)} and \textsc{Ours(nl)}.

Figure \ref{fig:all_results} highlights the number of detected bins/clusters across different quality levels. For instance, the number of bins/species whose $F_1$ scores above $0.9$ is represented in dark blue while those with $F_1$ scores between $0.5$ and $0.6$ are highlighted in red. Since we observe that the deviation in the number of detected species in most cases changes between at most $\pm 5$, the error bars were not included. 

The results underscore the effectiveness of $k$-mer features, which are central to our paper's focus. More specifically, as it was observed in the study \cite{zhou2024dnabert}, $k$-mer feature vectors (i.e., \textsc{kmer-cosine}) surpass some genome foundation models like \textsc{HyenaDNA} and \textsc{DNABERT-2}. Furthermore, we highlight the importance of choosing an appropriate similarity measure by showcasing the performance of the \textsc{Ours(kmer-$l_1$)} variant. We note that Proposition \ref{prop:lipschitz_equivalance} establishes a link between $k$-mer space equipped with $l_1$ distance and read space with edit or Hamming distance. Thus, leveraging an appropriate metric in defining the similarities between reads in the binning stage also contributes to the performance improvement of \textsc{Ours(kmer-$l_1$)}.

Despite the \textsc{Ours(nl)} model's lightweight design compared to recent genome foundation architectures, our non-linear embeddings effectively identify both high and low-quality bins. In the metagenomics binning task, for practical purposes, the high-quality bins are typically prioritized (depicted in dark blue), and our model demonstrates comparable performance to the \textsc{DNABERT-S} model on both \textsl{Synthetic} and \textsl{Plant} datasets, and it also outperforms \textsc{DNABERT-S} on the \textsl{Marine} dataset. Overall, our findings emphasize the efficacy of lightweight non-linear embeddings built on top of $k$-mer profiles.

\begin{figure}
     \centering
     \includegraphics[width=\textwidth]{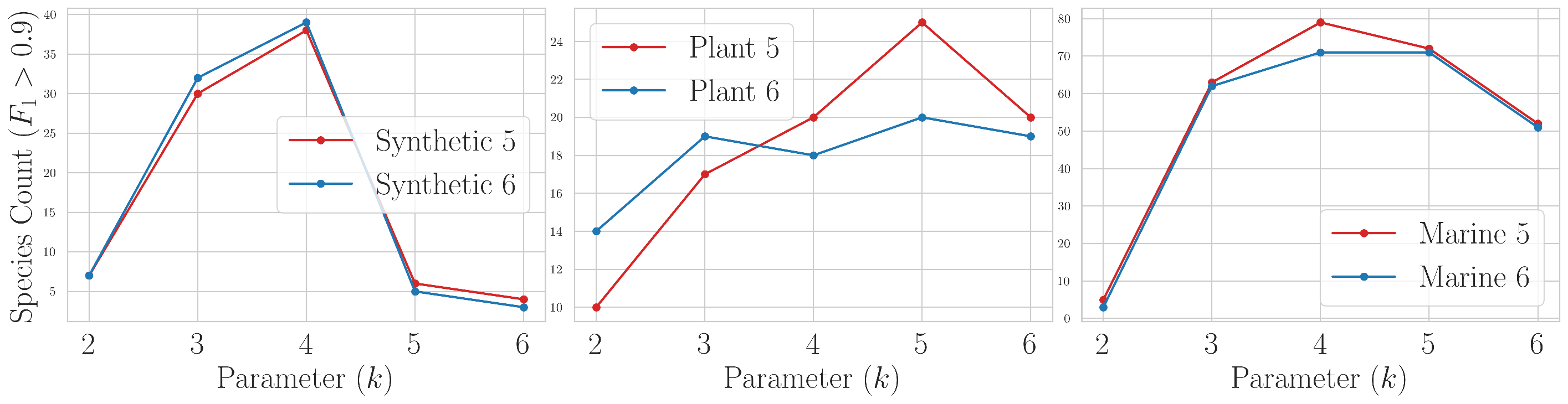}
     \caption{Influence of parameter $k$ on the \textsc{Ours(kmer-$l_1$)} model across different datasets.}
     \label{fig:influence_k}
 \end{figure}
 
\begin{figure}
     \centering
     \includegraphics[width=\textwidth]{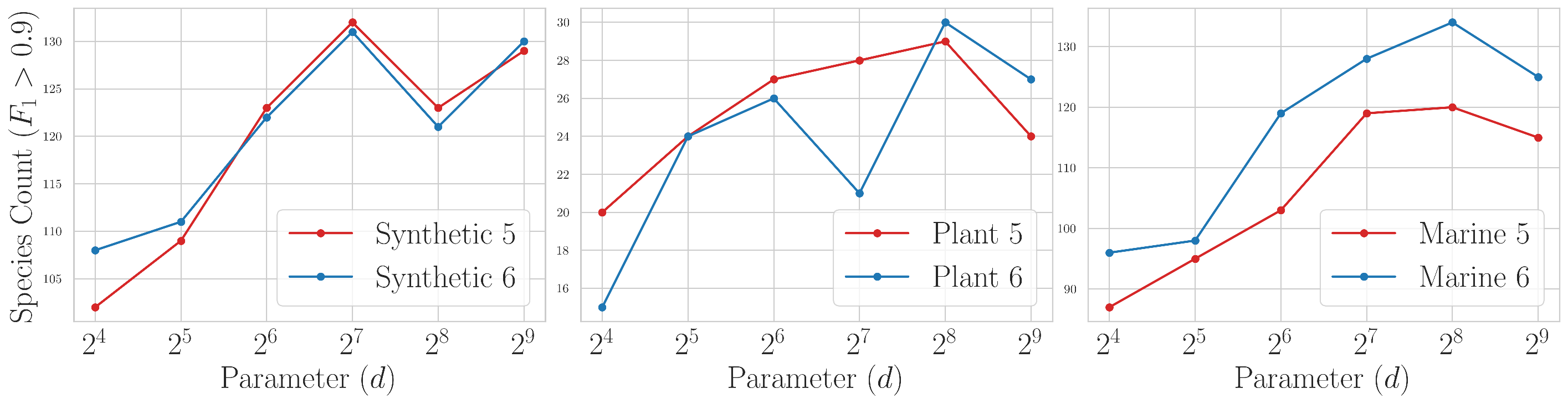}
     \caption{Influence of dimension size $(d)$ on the \textsc{Ours(nl)} model across different datasets.}
     \label{fig:influence_dim}
 \end{figure}

\subsection{Ablation Study}
We conduct a series of ablation studies in order to gain deeper insights into different components of the proposed architectures. We will start first by examining the impact of the parameter $k$ which is used to define the $k$-mer profiles. It plays a fundamental role in the paper because all the model variants that we introduced rely on it. As also explained in Section \ref{sec:proposed_model}, the selection of $k$ also influences the identifiability of reads. In this regard, we evaluate the performance of the \textsc{Ours(kmer-$l_1$)} model across various $k$ values. Figure \ref{fig:influence_k} shows the number of high-quality bins (i.e., $F_1 > 0.9$) for different $k$ settings and reveals that the optimal performance is generally achieved when $k$ is set to $4$ on different datasets. It also holds true across our different architectures, so we have set $k=4$ in our models.

We also examine the influence of the output dimension size on the performance for the \textsc{Ours(nl)} model. We employed a very shallow architecture as described in Section \ref{subsec:nonlinear:embedding}, so the output dimension size plays a vital role in the computational cost and performance. As can be observed in Figure \ref{fig:influence_dim}, the increase in dimension size also positively contributes to the number of detected high-quality bins, and the performance mostly saturates after $2^7$. In order to have a fair comparison with the baselines, we also used $2^8$ as an output dimension size in our architectures.

For the \textsc{Ours(nl)} architecture, it is very natural to use Bernoulli distribution to model whether a given pair of reads belongs to the same genome or not. However, different organisms might share similar regions in their genetic codes; therefore, assuming binary interactions might not suit well in every situation. In this regard, we rewrite Eq. \ref{eq:nonlinear_loss} by Poisson distribution that we also employed for the \textsc{Ours(pois)} architecture as follows:
\begin{align*}
\mathcal{L}_{\textsc{nl}}^{poisson}  := \frac{1}{|\mathcal{I}|}\sum_{(i,j) \in\mathcal{I} }-y_{ij}\log\lambda_{ij} + \lambda_{ij} 
\end{align*}
where $\lambda_{ij} := \exp(-\| \mathcal{E}_{\textsc{nl}}(\mathbf{r}_i) - \mathcal{E}_{\textsc{nl}}(\mathbf{r}_j) \|^2)$ and $y_{ij}\in\{0,1\}$ denotes if reads $i$ and $j$ belong to the same genome or not. We also test the loss function proposed by \cite{panSemiBin2SelfsupervisedContrastive2023}, which we refer as \textit{hinge loss}:
\begin{align*}
\mathcal{L}_{\textsc{nl}}^{hinge}  := \frac{1}{|\mathcal{I}|}\sum_{(i,j) \in\mathcal{I} }y_{ij} \| \mathcal{E}_{\textsc{nl}}(\mathbf{r}_i) - \mathcal{E}_{\textsc{nl}}(\mathbf{r}_j) \|^2 + (1-y_{ij})\max\{0, 1-\| \mathcal{E}_{\textsc{nl}}(\mathbf{r}_i) - \mathcal{E}_{\textsc{nl}}(\mathbf{r}_j) \|\}^2 
\end{align*}

Table \ref{table:loss_func_comparison} outlines the number of high-quality bins, and the Bernoulli and Poisson loss functions show comparable performances. On the other hand, we observe that it is possible to obtain even higher scores using the Hinge loss. However, it fails to have a probabilistic interpretation.

\begin{table}
\centering
\caption{Comparison of loss functions by the number of clusters (with $F_1 > 0.9$).}
\label{table:loss_func_comparison}
\begin{tabular}{rcccccc}
\toprule
 & \textsl{Synthetic 5} & \textsl{Synthetic 6} & \textsl{Plant 5} & \textsl{Plant 6} & \textsl{Marine 5} & \textsl{Marine 6} \\\cmidrule(rl){2-2}\cmidrule(rl){3-3}\cmidrule(rl){4-4}\cmidrule(rl){5-5}\cmidrule(rl){6-6}\cmidrule(rl){7-7}
Bernoulli & $123$ & $121$ & $29$ & $30$ & $120$ & $134$ \\
Poisson & $128$ & $123$ & $30$ & $28$ & $121$ & $128$ \\
Hinge & $135$ & $131$ & $40$ & $32$ & $131$ & $140$\\\bottomrule
\end{tabular}
\end{table}
\section{Conclusion}\label{sec:conclusion}

In this study we have shown the efficacy of non-linear $k$-mer-based embeddings for metagenomic binning tasks, providing a compelling alternative to more recently proposed complex genome foundation models. Our work revisits and expands the theoretical framework surrounding $k$-mers, specifically addressing the identifiability of DNA fragments through their $k$-mer profiles and establishing new bounds on the distances defining relevant metric spaces. This theoretical insight not only reinforces the validity of using $k$-mer-based approaches for genome representation but also highlights their broader applicability in genomic research, such as taxonomic profiling/classification \cite{shenKMCPAccurateMetagenomic2023,ulrichFastSpaceefficientTaxonomic2024} and phylogenetic analysis \cite{yuProteinSpaceNatural2013}.

The lightweight model being proposed in the paper, grounded in these theoretical principles, shows considerable promise in the field of metagenomic binning. It achieves a performance comparable to that of state-of-the-art genome foundation models while requiring orders of magnitude fewer computational resources. This feature is especially important for large-scale genomic analyses that are currently driven by recent advances in sequencing technologies and where computational efficiency is paramount.

\subsection*{Limitations and Societal Impact}

The ability to scale up metagenomic binning holds the potential for broader societal impacts through an improved understanding of the diversity and function of the microbial communities that influence our health and environment. Insights into these communities can also play an essential role in achieving the sustainability goals \cite{timmisContributionMicrobialBiotechnology2017, akinsemoluRoleMicroorganismsAchieving2018a}, in particular good health and well-being (SDG-3), life below
water (SDG-14), and life on land (SDG-15), to name a few. The current study is partly limited by the experimental setup, which is only based on synthetically generated genomic data following the experimental setup of similar studies such as \cite{zhou2023dnabert2,zhou2024dnabert}. While the discussions about computational resources are not expected to be significantly affected by the type of data being used, the evaluations and comparisons of the quality of the recovered genomes most likely will. For instance, samples containing genomes of closely related species or strains of the same species will generally be more difficult to separate into distinct clusters. As part of future work, we plan to pursue a more rigorous analysis of the binning quality using long-read sequences from real-world data. 

\section*{Acknowledgements}
We would like to thank the reviewers for their constructive feedback and insightful comments. We would also like to thank Mads Albertsen, Katja Hose and all members of Albertsen Lab at Aalborg University for the valuable and fruitful discussions. This work was supported by a research grant (VIL50093) from VILLUM
FONDEN.

\bibliographystyle{plain}
\bibliography{references}






\newpage
\appendix
\section{Appendix}

\subsection{Theoretical Analysis}


\begin{lemma}\label{lemma:left_single_occurence_equivalence}
Let $\mathbf{r}=r_1\cdots r_{\ell}$ be a read satisfying  $c_{\mathbf{r}}(\mathbf{x}_i) = 1$ for its every $(k-1)$-mer, $\mathbf{x}_i=r_{i}\cdots r_{i+k-2}$ where $ 1\leq i \leq i^{*}$ for some $i^{*} \in \{1,\ldots, \ell -k+2\}$. If there exists a read $\mathbf{q}=q_1\cdots q_{\ell}$ having the same $k$-mer profile as $\mathbf{r}$, then $r_i=q_i$ for all $1 \leq i \leq i^{*} + k - 2$.
\end{lemma}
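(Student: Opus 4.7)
My plan is to prove the statement by induction on $m \in \{1, \ldots, i^*\}$, establishing at step $m$ that $r_i = q_i$ for all $1 \leq i \leq m + k - 2$. The desired conclusion is then the case $m = i^*$.

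For the base case ($m = 1$), I would compare the induced $(k-1)$-prefix and $(k-1)$-suffix multisets of the $k$-mer profile. Since $\mathbf{r}$ and $\mathbf{q}$ share the same $k$-mer profile, the multiset of $(k-1)$-prefixes of their $k$-mers agrees, and likewise for the $(k-1)$-suffixes. Taking, for each $(k-1)$-mer $\mathbf{z}$, the signed difference between its prefix-multiplicity and its suffix-multiplicity, all positions in the range $\{2, \ldots, \ell - k + 1\}$ cancel out and only the two boundary $(k-1)$-mers of the read survive: $\mathbf{x}_1$ with sign $+$ and $\mathbf{x}_{\ell - k + 2} := r_{\ell-k+2}\cdots r_{\ell}$ with sign $-$ for $\mathbf{r}$, and analogously $q_1 \cdots q_{k-1}$ and $q_{\ell-k+2}\cdots q_{\ell}$ for $\mathbf{q}$. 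Equating the two signed differences yields the multiset identity $\{\mathbf{x}_1, q_{\ell-k+2}\cdots q_{\ell}\} = \{q_1\cdots q_{k-1}, \mathbf{x}_{\ell-k+2}\}$. There are only two possible pairings; the alternative one forces $\mathbf{x}_1 = \mathbf{x}_{\ell - k + 2}$, which contradicts $c_{\mathbf{r}}(\mathbf{x}_1) = 1$ because positions $1$ and $\ell - k + 2 \geq 2$ are distinct. The only remaining pairing gives $q_1\cdots q_{k-1} = \mathbf{x}_1$, which is exactly the base case.

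For the inductive step, suppose $r_i = q_i$ for $i \leq m + k - 2$ with $m < i^*$. The assumption $c_{\mathbf{r}}(\mathbf{x}_m) = 1$ implies that $r_m \cdots r_{m+k-1}$ is the unique $k$-mer in $\mathbf{r}$'s profile whose $(k-1)$-prefix equals $\mathbf{x}_m$, and that it occurs with multiplicity one. Transferring this via the $k$-mer profile equality, there is exactly one position $j \in \{1, \ldots, \ell - k + 1\}$ of $\mathbf{q}$ at which $\mathbf{x}_m$ appears as a $(k-1)$-mer, and the $k$-mer starting there must be $r_m\cdots r_{m+k-1}$. By the inductive hypothesis, $q_m\cdots q_{m+k-2} = r_m \cdots r_{m+k-2} = \mathbf{x}_m$, so position $m$ itself is such an occurrence and therefore the unique one. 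Consequently $q_m \cdots q_{m+k-1} = r_m \cdots r_{m+k-1}$, yielding $q_{m+k-1} = r_{m+k-1}$ and completing the step.

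The main obstacle is the base case, since the induction has no prior alignment to exploit. The device of taking the signed prefix-minus-suffix difference isolates precisely the boundary $(k-1)$-mers of the two reads, and the single-occurrence assumption on $\mathbf{x}_1$ is just enough to rule out the ``cyclic'' swap of these boundary terms. The inductive step is comparatively routine: the already-aligned prefix of length $m + k - 2$ pins down an occurrence of $\mathbf{x}_m$ in $\mathbf{q}$, and the de Bruijn-style uniqueness at $\mathbf{x}_m$ then forces the next character.
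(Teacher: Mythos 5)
Your proof is correct, and its inductive step is essentially the one in the paper: the single occurrence of $\mathbf{x}_m$ in $\mathbf{r}$ makes $r_m\cdots r_{m+k-1}$ the unique $k$-mer of the shared profile with $(k-1)$-prefix $\mathbf{x}_m$, and the already-aligned prefix of $\mathbf{q}$ forces the $k$-mer of $\mathbf{q}$ starting at position $m$ to be that $k$-mer, pinning down $q_{m+k-1}$. Where you genuinely depart from the paper is the base case. The paper locates the occurrence of $\mathbf{r}$'s first $k$-mer inside $\mathbf{q}$ and, if it is not at position $1$, extends one position to the left to exhibit a second occurrence of the $(k-1)$-mer $r_1\cdots r_{k-1}$ in $\mathbf{r}$, with a delicate degenerate sub-case in which all the letters $q_{j-1},\ldots,q_{j+k-1}$ are forced to coincide. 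You instead compute, for each $(k-1)$-mer, the signed difference of its prefix and suffix multiplicities over the $k$-mer profile --- a de Bruijn-graph degree-balance argument --- which isolates the two boundary $(k-1)$-mers of each read and reduces the base case to a two-element multiset identity; the hypothesis $c_{\mathbf{r}}(\mathbf{x}_1)=1$ (together with $\ell-k+2\geq 2$) then excludes the cyclic pairing. This is cleaner than the paper's case analysis and makes transparent that the single-occurrence assumption on $\mathbf{x}_1$ is exactly what rules out the rotation ambiguity underlying condition 1 of Theorem~\ref{theorem:3conditions}; the paper's version, in exchange, yields agreement up to position $k$ at once, though your induction recovers that immediately. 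One small precision to add: equality of $k$-mer profiles only transfers the number of occurrences of $\mathbf{x}_m$ as the $(k-1)$-prefix of a $k$-mer (start positions $1,\ldots,\ell-k+1$), not a possible extra occurrence at the terminal position $\ell-k+2$ of $\mathbf{q}$; since you restrict $j$ to $\{1,\ldots,\ell-k+1\}$ and only use the $k$-mer of $\mathbf{q}$ starting at position $m\leq\ell-k+1$, the step is sound, but ``appears as a $(k-1)$-mer'' should be read as ``appears as the prefix of some $k$-mer occurrence.''
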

\begin{proof}
Let $\mathbf{r}$ be a read where $(k-1)$-mers, $\mathbf{x}_i := r_i\cdots r_{i+k-2}$, appears only once in $\mathbf{r}$ for all  $1 \leq i \leq i^{*}$. As a basis step, we first show that the initial $k$-mer of $\mathbf{q}$ must be the same as that of $\mathbf{r}$, i.e., $q_1\cdots q_k = r_1\cdots r_k $. 
Suppose this is not the case, then there must exist an index $1 \leq j \leq \ell-k + 1$ such that $q_{j}\cdots q_{j+k-1} = r_1\cdots r_{k}$ since they have identical $k$-mer profiles by initial assumption. Note that $j \not= 1$; otherwise, the reads would share the same prefixes. Therefore, $q_{j-1}\ldots q_{j+k-2}$ is another $k$-mer of $\mathbf{q}$, and it must also be present in $\mathbf{r}$ because they share the same $k$-mer profile. Since $r_1\cdots r_{k-1} = q_{j}\cdots q_{j+k-2}$, k-mer $q_{j-1}\cdots q_{j+k-2}$ must be the prefix of $\mathbf{r}$; otherwise $(k-1)$-mer $q_{j}\ldots q_{j+k-2}$ (i.e. $r_{1}\ldots r_{k-1}$) appears in $\mathbf{r}$ more than once and it violates the assumption that $c_{\mathbf{r}}(\mathbf{x}_i) = 1$ for all $\mathbf{x}_i=r_{i}\cdots r_{i+k-2}$ where $i \in \{1,\ldots,i^{*}\}$. However, if $q_{j-1}\cdots q_{j+k-2}$ is the prefix of $\mathbf{r}$, then it enforces that $q_{j-1}=q_{j}=\cdots=q_{j+k-2}=q_{j+k-1}$ and a $(k-1)$-mer occurs more than once in the substring $r_1\cdots r_{i^*}$ so reads $\mathbf{r}$ and $\mathbf{q}$ must have the same initial $k$-mer.

For the inductive step, suppose that reads $\mathbf{r}$ and $\mathbf{q}$ agree up to position index $j$ where $k\leq j < i^{*}-k-2$, i.e., $r_i=q_i$ for all $1\leq i \leq j$. We will show that $r_{j+1}$ and $q_{j+1}$ are also the same. Since $j < i^{*}-k+2$, substring $q_{j-k+2}\ldots q_{j+1}$ is also a $k$-mer of $\mathbf{q}$. Since reads have the same $k$-mer profiles, $\mathbf{r}$ also includes it. By the inductive hypothesis, we have $q_{j-k+2}\cdots q_{j}=r_{j-k+2}\cdots r_{j}$, and we know $(k-1)$-mer $r_{j-k+2}\cdots r_{j}$ occurs only once so it implies that $q_{j-k+2}\cdots q_{j+1}=r_{j-k+2}\cdots r_{j+1}$, i.e. $r_{j+1} = q_{j+1}$. In conclusion, we have $r_i = q_i$ for all $1\leq i \leq i^{*} -k + 2$.
\end{proof}


\begin{corollary}\label{cor:equivalence}
If the $(k-1)$-mer profile of a read contains at most one occurrence for each of its $(k-1)$-mers, then there is no other read having the same $k$-mer profile.
\end{corollary}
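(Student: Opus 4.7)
The plan is to derive Corollary \ref{cor:equivalence} as an immediate specialization of Lemma \ref{lemma:left_single_occurence_equivalence} by pushing the parameter $i^{*}$ to its maximum admissible value. Concretely, I would first observe that if two reads share the same $k$-mer profile, then the total number of $k$-mers is the same for both, so they necessarily have the same length $\ell$ (since a read of length $\ell$ contains exactly $\ell-k+1$ $k$-mers). This legitimizes writing $\mathbf{q}=q_1\cdots q_{\ell}$ in the same alphabet of indices as $\mathbf{r}$.

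Next I would translate the hypothesis of the corollary into the hypothesis of the lemma. The corollary assumes that the $(k-1)$-mer profile of $\mathbf{r}$ has entries at most one, which means that for every index $1 \leq i \leq \ell-k+2$, the $(k-1)$-mer $\mathbf{x}_i := r_i\cdots r_{i+k-2}$ occurs exactly once in $\mathbf{r}$ (entries that are zero correspond to $(k-1)$-mers that do not appear as some $\mathbf{x}_i$ at all, so they are not relevant to the lemma's hypothesis). Thus the lemma's precondition is satisfied with $i^{*} = \ell - k + 2$, which is the largest starting index of a $(k-1)$-mer in a read of length $\ell$.

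I would then invoke Lemma \ref{lemma:left_single_occurence_equivalence}: any read $\mathbf{q}$ with the same $k$-mer profile as $\mathbf{r}$ must satisfy $r_i = q_i$ for all $1 \leq i \leq i^{*} + k - 2$. Substituting $i^{*}=\ell-k+2$ gives $i^{*}+k-2 = \ell$, so $r_i=q_i$ for every $1\leq i \leq \ell$, i.e. $\mathbf{r}=\mathbf{q}$.

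I do not expect any real obstacle here; the corollary is essentially a one-line consequence of the lemma, and the only care needed is bookkeeping. The mild subtlety is the equal-length justification and the reinterpretation of "at most one occurrence" as "exactly one occurrence" for the $(k-1)$-mers that actually appear in $\mathbf{r}$, both of which are routine.
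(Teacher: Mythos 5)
Your proposal is correct and follows essentially the same route as the paper: both invoke Lemma \ref{lemma:left_single_occurence_equivalence} with the largest admissible value $i^{*}=\ell-k+2$, so that the lemma's conclusion $r_i=q_i$ for $1\leq i\leq i^{*}+k-2=\ell$ forces $\mathbf{r}=\mathbf{q}$. Your added bookkeeping (equal lengths from equal total $k$-mer counts, and reading ``at most one'' as ``exactly one'' for the $(k-1)$-mers actually present) is sound and merely makes explicit what the paper leaves implicit.
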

\begin{proof}
Let $\mathbf{r}=r_1\cdots r_{\ell}$ and $\mathbf{q}=q_1\cdots q_{\ell}$ be reads having the same $k$-mer profile and each of their $(k-1)$-mers appear only once. By Lemma \ref{lemma:left_single_occurence_equivalence}, we have $r_i = q_i$ for all $1 \leq i \leq \ell$ since $c_{\mathbf{r}}(\mathbf{x}_i) = 1$ for all $(k-1)$-mers $\mathbf{x}_i=r_{i}\ldots r_{i+k-2}$ where $0 \leq i \leq \ell-k+1$
\end{proof}

\begin{theorem}
Let $\mathbf{r}$ be a read of length $\ell$. There exists no other distinct read having the same $k$-mer profile if and only if it does not satisfy any of the following conditions:
\begin{enumerate}
    \item $r_{1}\cdots r_{k-1} = r_{\ell - k -2}\cdots r_{\ell }$ and $r_i \not= r_1$ for some $1 < i < \ell-k-2$.
    \item $r_i\cdots r_{i+k-2} = r_j\cdots r_{j+k-2}$ and $r_g\cdots r_{g+k-2} = r_h\cdots r_{h+k-2}$ for some indices $1 \leq i < g < j < h \leq \ell - k +2$ where $r_{i+k-1}\cdots r_{g-1} \not= r_{j+k-1}\cdots r_{h-1}$.
    \item $r_i\cdots r_{i+k-2} = r_j\cdots r_{j+k-2} = r_h\cdots r_{h+k-2}$  for some indices $1 \leq i < j < h \leq \ell-k+2$ where $r_{i+k-1}\cdots r_{j-1} \not= r_{j+k-1}\cdots r_{h-1}$.
\end{enumerate}
\end{theorem}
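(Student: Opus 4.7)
The statement is an ``if and only if,'' so I would prove both directions. The contrapositive form is natural: a distinct read $\mathbf{q}\neq\mathbf{r}$ with the same $k$-mer profile exists \emph{iff} at least one of the three conditions holds. The backward direction (any condition $\Rightarrow$ non-uniqueness) is the constructive, easier half; the forward direction (non-uniqueness $\Rightarrow$ some condition) is where the bulk of the argument lives and reuses Lemma \ref{lemma:left_single_occurence_equivalence} and Corollary \ref{cor:equivalence}.

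\textbf{Step 1: constructive witnesses for each condition.} For Condition~1, if $\mathbf{r}$ starts and ends with the same $(k-1)$-mer and contains some interior letter different from $r_1$, then I exhibit a cyclic-rotation-type swap that moves a mismatching block past the repeated boundary $(k-1)$-mer; since the prefix and suffix $(k-1)$-mers coincide, every $k$-mer crossing the swap point is preserved. For Condition~2, the two pairs of shared $(k-1)$-mers at positions $i<g<j<h$ delimit two internal substrings $r_{i+k-1}\cdots r_{g-1}$ and $r_{j+k-1}\cdots r_{h-1}$ which can be exchanged while preserving every $k$-mer count (the only $k$-mers that change are those overlapping the boundaries, and those are preserved because the flanking $(k-1)$-mers agree). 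For Condition~3, the same type of swap works using the triply-repeated $(k-1)$-mer as both boundaries of the exchange. In each case I check directly that the resulting $\mathbf{q}$ is different from $\mathbf{r}$ (the inequality of the interior substrings guarantees this) and has the same $k$-mer profile.

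\textbf{Step 2: forward direction by a ``first divergence'' argument.} Assume $\mathbf{q}\neq\mathbf{r}$ share the same $k$-mer profile. Let $j+1$ be the first position where they differ (so $r_{1}\cdots r_{j}=q_{1}\cdots q_{j}$ and $r_{j+1}\neq q_{j+1}$). By Lemma \ref{lemma:left_single_occurence_equivalence}, the existence of such a $j$ forces some $(k-1)$-mer appearing in $r_{1}\cdots r_{j+1}$ to occur at least twice in $\mathbf{r}$; call this repeated $(k-1)$-mer $X$, with occurrences at positions $i<j'$ where $j'\le j+1$. I would then case-split on how the ``continuation'' out of $X$ is used in $\mathbf{q}$ versus $\mathbf{r}$: because the $k$-mer multiset is the same, $\mathbf{q}$ must eventually use the same outgoing $k$-mers from $X$ but in a different order, which forces a second repeated $(k-1)$-mer later in the read (giving Condition~2) or a third occurrence of $X$ itself (giving Condition~3), unless the divergence happens at the very end of the read, where the prefix/suffix repetition of Condition~1 is the only remaining obstruction. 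This case analysis is essentially a combinatorial formalization of the well-known fact that the Eulerian trails of a directed graph are unique precisely when no two edges incident to a vertex can be reordered.

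\textbf{Expected main obstacle.} The difficult part is Step~2: making the case analysis exhaustive and clean. In particular, I need to be careful about boundary effects (first/last $(k-1)$-mer of the read cannot act as an intermediate branching vertex in the same way as interior ones), which is precisely what forces Condition~1 to be stated separately from Conditions~2 and~3. Tracking which $(k-1)$-mer is ``used up'' by $\mathbf{q}$ at each divergence and showing that every such divergence can be read off as one of the three explicit structural patterns in $\mathbf{r}$ is where I expect to spend most of the effort; once that bookkeeping is in place, identifying the particular indices $i,g,j,h$ (or $i,j,h$) becomes a direct translation.
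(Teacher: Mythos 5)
Your plan coincides with the paper's proof in both directions: the constructive rotation/substring-swap witnesses you describe for the three conditions are exactly the ones the paper exhibits, and your converse argument---invoking Lemma \ref{lemma:left_single_occurence_equivalence} at the first disagreement and case-splitting into a second repeated $(k-1)$-mer (Condition 2), a third occurrence of the same $(k-1)$-mer (Condition 3), or a boundary repetition (Condition 1)---is the same bookkeeping the paper performs, merely anchored at the first divergence position of $\mathbf{r}$ and $\mathbf{q}$ rather than at the first repeated $(k-1)$-mer of $\mathbf{r}$. The exhaustive case analysis and the recursion through nested repeats that you flag as the main obstacle are precisely where the paper's own argument concentrates, so the proposal is essentially the paper's route.
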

\begin{proof}
Let $\mathbf{r}=r_1\cdots r_{\ell}$ be a read of length $\ell$, and we will first show that if a read satisfies one of these conditions, then we can find a read other than $\mathbf{r}$ having the same $k$-mer profile. (i) Suppose that we have $r_1\cdots r_{k-1} = r_{\ell-k+2 }\cdots r_{\ell}$, then the read defined as $r_ir_{i+1}\cdots r_{\ell-k+2}\cdots r_{\ell-1}r_{\ell}r_kr_{k+1}\cdots r_{i-2}r_{i-1}$ will be distinct from $\mathbf{r}$ for some $i\in \{k,\ldots,\ell-k+1 \}$, and it shares the same $k$-mer profile. (ii) Let $\mathbf{r}$ be a read holding the second condition so we have $r_{i}\cdots r_{i+k-2} = r_{j}\cdots r_{j+k-2}$ and $r_{g}\cdots r_{g+k-2} = r_{h}\cdots r_{h+k-2}$ for some indices $1 \leq i < g < j < h \leq \ell$. Then, we can construct a different read having the same $k$-mer profile by interchanging the substrings $r_{i+k-1}\cdots r_{g-1}$ and $r_{j+k-1}\cdots r_{h-1}$. (iii) As the last condition, suppose that we have $r_i\cdots r_{i+k-2} = r_j\cdots r_{j+k-2} = r_h\cdots r_{h+k-2}$ for some indices $1 \leq i < j < h \leq \ell-k+2$ such that substrings $r_{i+k-1}\cdots r_{j-1}$ and $r_{j+k-1}\cdots r_{h-1}$ are not the same so we can generate a new read with an equivalent $k$-mer profile by swapping these substrings.

Conversely, assume that there exists a read $\mathbf{q}$ that shares the same $k$-mer profile as $\mathbf{r}$ but doesn't meet any of the conditions outlined. Let $\mathbf{x}$ be the first $(k-1)$-mer in $\mathbf{r}$ where it appears more than once in $\mathbf{r}$, and let $i_1\leq i_2\leq \ldots \leq i_n$ be the position indices where $\mathbf{x}$ occurs in $\mathbf{r}$, i.e. $\mathbf{x} = r_{i_h}\cdots r_{i_h+k-2}$ $\forall h \in \{1, \ldots, i_n\}$ and $c_{\mathbf{r}}(\mathbf{x}) = n$. It must satisfy $i_1 >1$ or $i_n < \ell - k +2$; otherwise, it violates the first condition. By Lemma \ref{lemma:left_single_occurence_equivalence}, $\mathbf{r}$ and $\mathbf{q}$ must match up to the position index $i_n+k-3$. Since $\mathbf{r}$ and $\mathbf{q}$ have the same $k$-mer profiles and $c_{\mathbf{r}}(r_{i_1-1}\cdots r_{i_1-k+1})=c_{\mathbf{q}}(q_{i_1-1}\cdots q_{i_1-k+1})=1$, we can also write that $r_j = q_j$ for all $1\leq j \leq i_1+k-2$ and we will show that reads, $\mathbf{r}$ and $\mathbf{q}$, must be the same up to the position index $i_n+k-2$.

If $(k-1)$-mer, $\mathbf{x}$, occurs more than twice in $\mathbf{r}$ (i.e., $n > 2$), then either $i_{h+1} - i_{h} = 1$ or substrings $r_{i_h}\cdots r_{i_{h+1}}$ for each $h \in \{1, n-1\}$ must be the same due to the third condition. Thus, we obtain identical reads up to $i_{n}+k-2$.

Now, we will consider the case in which $\mathbf{x}$ appears exactly twice in $\mathbf{r}$ (i.e., $n = 2$). If substring $r_{i_1+1}\cdots r_{i_n-1}$ consists only of $(k-1)$-mers occurring at most once in $\mathbf{r}$, then again, we achieve the equivalence of the reads up to index $i_{n}+k-2$. Therefore, suppose that there exists a $(k-1)$-mer $\mathbf{y} \neq \mathbf{x}$ appearing more than once in $\mathbf{r}$ and let $j_1,\ldots,j_m$ be its position occurrence indices (i.e. $\mathbf{y} = r_{i_h}\cdots r_{i_h+k-2}$ $\forall h \in \{1, \ldots, j_m\}$). Note that $j_m$ cannot be greater than $i_n$ since it violates the second condition, so we assume that all occurrences of $\mathbf{y}$ are within $r_{i_1+1}\cdots r_{i_n+k-3}$. By recursively applying the same strategies for the substring, $r_{j_1+1}\cdots r_{j_m-1}$, we can conclude that the reads must be the same up to $i_{n}+k-2$.

Note that the remaining part of the read, $r_{i_n+k-1}\cdots r_{\ell}$, does not contain any $(k-1)$-mer appearing in $r_{1}\cdots r_{i_n+k-2}$, so by applying the same strategy to the rest of the read, we can obtain that $\mathbf{r} = \mathbf{q}$.
\end{proof}

\begin{prop}
Let $M_1 = (\aleph_{\ell}, d_{\mathcal{H}})$ and $M_2 = (\mathbb{N}^{|\Sigma^k|}, \|\cdot\|_1)$ be the metric spaces denoting the set of identifiable reads and their corresponding $k$-mer profiles equipped with edit and $\ell_1$ distances, respectively. The $k$-mer profile function, $c :M_1 \rightarrow M_2$, mapping given any read, $\mathbf{r}$, to its corresponding $k$-mer profile, $c_{\mathbf{r}}:=c(\mathbf{r})$, is a Lipschitz equivalence, i.e. it satisfies
\begin{align}
\forall \mathbf{r},\mathbf{q}\in\Sigma^{\ell} \ \ \alpha_l d_{\mathcal{H}}( \mathbf{r}, \mathbf{q} ) \leq  \| c_\mathbf{r} - c_\mathbf{q} \|_1 \leq \alpha_u d_{\mathcal{H}}( \mathbf{r}, \mathbf{q} ) 
\end{align}
for $\alpha_l = 1 / \ell$ and $\alpha_u = k |\Sigma|^k$, so $M_1$ and $M_2$ are Lipschitz equivalent.
\end{prop}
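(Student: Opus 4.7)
The plan is to establish the two inequalities separately: the upper bound follows from a simple counting argument about how many $k$-mer counts can shift when one letter of a read is altered, while the lower bound relies essentially on the identifiability hypothesis (i.e., on Theorem~\ref{theorem:3conditions}) combined with the integrality of $k$-mer counts.

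For the upper bound $\|c_{\mathbf{r}} - c_{\mathbf{q}}\|_1 \leq k|\Sigma|^k d_{\mathcal{H}}(\mathbf{r},\mathbf{q})$, I would fix one index $i$ with $r_i \neq q_i$ and observe that position $i$ is contained in at most $k$ length-$k$ windows, namely those starting at indices $\max(1,i-k+1),\ldots,\min(\ell-k+1,i)$. At each such window the $k$-mer read from $\mathbf{r}$ possibly differs from the one read from $\mathbf{q}$, and any such discrepancy affects exactly two coordinates of $c_{\mathbf{r}} - c_{\mathbf{q}}$ (one is decremented, one is incremented), contributing at most $2$ to the $\ell_1$ norm. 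Hence each differing position contributes at most $2k$, and summing over all $d_{\mathcal{H}}(\mathbf{r},\mathbf{q})$ mismatched positions yields $\|c_{\mathbf{r}} - c_{\mathbf{q}}\|_1 \leq 2k\, d_{\mathcal{H}}(\mathbf{r},\mathbf{q})$, which is subsumed by the stated constant $k|\Sigma|^k$ whenever $|\Sigma|^k \geq 2$.

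For the lower bound $\tfrac{1}{\ell} d_{\mathcal{H}}(\mathbf{r},\mathbf{q}) \leq \|c_{\mathbf{r}} - c_{\mathbf{q}}\|_1$, the decisive ingredient is that restricting the domain to identifiable reads $\aleph_\ell$ makes the profile map $c$ injective by Theorem~\ref{theorem:3conditions}. If $\mathbf{r} = \mathbf{q}$ both sides vanish. Otherwise $c_{\mathbf{r}} \neq c_{\mathbf{q}}$ and, because the profiles live in $\mathbb{N}^{|\Sigma|^k}$, at least one coordinate of the difference has absolute value $\geq 1$, so $\|c_{\mathbf{r}} - c_{\mathbf{q}}\|_1 \geq 1$. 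Combining with the elementary bound $d_{\mathcal{H}}(\mathbf{r},\mathbf{q}) \leq \ell$ gives $\|c_{\mathbf{r}} - c_{\mathbf{q}}\|_1 \geq 1 \geq d_{\mathcal{H}}(\mathbf{r},\mathbf{q})/\ell$.

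The main conceptual obstacle is not a computation but making sure to invoke identifiability at the right moment: without the restriction to $\aleph_\ell$ there exist distinct non-identifiable reads with the same $k$-mer profile, forcing $\alpha_l = 0$. Theorem~\ref{theorem:3conditions} is exactly what rules this out, and the integrality of the counts then upgrades ``nonzero'' to ``at least one''. The constants $\alpha_l = 1/\ell$ and $\alpha_u = k|\Sigma|^k$ are clearly not tight (one can push $\alpha_u$ down to $2k$ with the argument above), but they suffice for the Lipschitz equivalence asserted, so I would not attempt to sharpen them in the proof.
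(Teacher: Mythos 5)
Your proof is correct and follows essentially the same route as the paper: the upper bound is the same window-counting argument (your per-position version with the triangle inequality over windows yields the sharper constant $2k$, whereas the paper's redundant sum over all $\mathbf{x}\in\Sigma^k$ produces the looser $k|\Sigma|^k$, which your bound subsumes), and the lower bound is identical -- identifiability gives injectivity of $c$ on $\aleph_\ell$, integrality gives $\|c_\mathbf{r}-c_\mathbf{q}\|_1\geq 1$, and $d_{\mathcal{H}}(\mathbf{r},\mathbf{q})\leq\ell$ finishes it. No gaps; your remark that the restriction to identifiable reads is exactly what prevents $\alpha_l=0$ matches the paper's (implicit) use of that hypothesis.
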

\begin{proof}
By definition, we have $c_{\mathbf{r}}(\mathbf{x}) = \sum_{i=1}^{\ell-k+1} [ r_i\cdots r_{i+k-1} = \mathbf{x} ] $ where the symbol, $[\cdot ]$, denotes the Iverson bracket. Then, we can write the upper bound as
\begin{align}
\| c_\mathbf{r} - c_\mathbf{q} \|_1 &= \sum_{\mathbf{x} \in \Sigma^k}\big| c_\mathbf{r}(\mathbf{x}) - c_\mathbf{q}(\mathbf{x}) \big|\nonumber
\\
&=\sum_{\mathbf{x} \in \Sigma^k}\Big| \sum_{i=1}^{\ell-k+1} [r_{i}\cdots r_{i+k-1} = \mathbf{x} ]  - \sum_{i=1}^{\ell-k+1} [q_{i}\cdots q_{i+k-1} = \mathbf{x} ] \Big| \nonumber
\\
&= \sum_{\mathbf{x} \in \Sigma^k}\Bigg| \sum_{i=1}^{\ell-k+1} \Big( [r_{i}\cdots r_{i+k-1} = \mathbf{x} ]  - [q_{i}\cdots q_{i+k-1} = \mathbf{x} ] \Big) \Bigg|\nonumber
\\
&\leq  \sum_{\mathbf{x} \in \Sigma^k}\sum_{i=1}^{\ell-k+1}\Big| [r_{i}\cdots r_{i+k-1} = \mathbf{x} ]  - [q_{i}\cdots q_{i+k-1} = \mathbf{x} ]  \Big| \nonumber
\\
&\leq \sum_{\mathbf{x} \in \Sigma^k}\sum_{i=1}^{\ell-k+1} [r_{i}\cdots r_{i+k-1} \not= q_{i}\cdots q_{i+k-1} ]  \nonumber
\\
&\leq \sum_{\mathbf{x} \in \Sigma^k}\sum_{i=1}^{\ell-k+1} \sum_{n=0}^{k-1} [r_{i+n} \not= q_{i+n} ] \nonumber
\\
&\leq  \sum_{\mathbf{x} \in \Sigma^k}\sum_{i=1}^{\ell-k+1} k [r_{i}\not= q_{i} ]\nonumber
\\
&\leq  k\sum_{\mathbf{x} \in \Sigma^k}\sum_{l=1}^{L} [r_{i}\not= q_{i} ]\nonumber
\\
&=k|\Sigma|^kd_{\mathcal{H}}\left( \mathbf{r}, \mathbf{q} \right)\nonumber
\end{align}
For the lower bound, we know that $d_{\mathcal{H}}\left( \mathbf{r}, \mathbf{q} \right)$ must be greater than $0$ if and only if $\mathbf{r}$ and $\mathbf{q}$ are different since $\mathbf{r},\mathbf{q}\in\aleph_L$ (i.e. identifiable); otherwise we have $\| c_\mathbf{r} - c_\mathbf{q} \|_1=d_{\mathcal{H}}\left( \mathbf{r}, \mathbf{q} \right)=0$. For non-identical reads, there must exist at least one $1 \leq i \leq \ell$ such that $r_i \not= q_i$, so it implies that there is an $k$-mer $\mathbf{x}^{\prime}\in\Sigma^k$ such that $c_\mathbf{r}(\mathbf{x}^{\prime}) \not= c_\mathbf{q}(\mathbf{x}^{\prime})$. Then we can write that
\begin{align*}
\frac{d_{\mathcal{H}}\left( \mathbf{r}, \mathbf{q} \right)}{\ell} \leq \| c_\mathbf{r} - c_\mathbf{q} \|_1
\end{align*}
\end{proof}

\subsection{Experiments}

For the evaluation of the models, we follow the same experimental set-up proposed by the recent genome foundation model \cite{zhou2024dnabert}, and we assess the performance of the models with respect to the quality of the number of detected clusters.

We suppose that the number of clusters (i.e., genomes) in the evaluation datasets is unknown, so we utilize the modified K-medoid algorithm\cite{zhou2024dnabert} to infer the clusters. It requires a threshold value due to the initially unknown number of clusters so to address this, we use separate datasets derived from the same source as the target evaluation datasets. For each method, we first compute the cluster centroids using the ground-truth labels and the embeddings generated by the respective approaches. Then, we calculate the similarities between the centroid vector and the read embeddings within the same cluster. The threshold value is selected as the $70$th percentile of these sorted similarity scores. After inferring the clusters by the modified K-medoid algorithm, the extracted cluster labels are then aligned with the ground truths labels by the \textit{Hungarian} algorithm.

For the experiments, we use the publicly available datasets provided by Zhou et al. \cite{zhou2024dnabert}, which include a training set of $2$ million non-overlapping DNA sequence pairs. For model evaluation, we work with $3$ types of datasets, each having two variants, and we provide various statistics about them in Table \ref{tab:dataset_statistics}. The datasets named as \textsl{Dataset $0$} are only used to detect the optimal threshold value for the K-medoid algorithm and the other variants (i.e., \textsl{Dataset $5$} and \textsl{Dataset $6$}) are considered for the evaluation. We truncated sequences longer than $10,000$ bases, excluded those shorter than $2,500$ bases, and omitted species represented by fewer than $10$ sequences.

\begin{table}[h]
\caption{Statistics of the datasets used in the experimental evaluations.}
\label{tab:dataset_statistics}
\centering
\begin{tabular}{rccccc}
\toprule
\multicolumn{1}{r}{\rotatebox{70}{Dataset}} & \multicolumn{1}{c}{\rotatebox{70}{Species}} & \multicolumn{1}{c}{\rotatebox{70}{Sequences ($|\mathcal{R}|$)}} & \multicolumn{1}{c}{\rotatebox{70}{Min. Seq. Length}} & \multicolumn{1}{c}{\rotatebox{70}{Max. Seq. Length}} & \multicolumn{1}{c}{\rotatebox{70}{Avg. Seq. Length}} \\\midrule
\textsl{Synthetic 0} & $200$ & $20,000$ & $9,944$ & $10,000$ & $9,999$ \\
\textsl{Synthetic 5} & $323$ & $37,278$ & $9,919$ & $10,000$ & $9,999$ \\
\textsl{Synthetic 6} & $249$ & $28,206$ & $9,913$ & $10,000$ & $9,999$ \\
\midrule
\textsl{Plant 0} & $108$ & $10,800$ & $2,089$ & $20,000$ & $9,040$ \\
\textsl{Plant 5} & $323$ & $10,800$ & $81$ & $20,000$ & $3,953$ \\
\textsl{Plant 6} & $374$ & $111,395$ & $80$ & $20,000$ & $3,709$ \\
\midrule
\textsl{Marine 0} & $326$ & $32,600$ & $1971$ & $20,000$ & $9,600$ \\
\textsl{Marine 5} & $660$ & $167,875$ & $83$ & $20,000$ & $4,755$ \\
\textsl{Marine 6} & $664$ & $175,316$ & $80$ & $20,000$ & $4,840$\\\bottomrule
\end{tabular}
\end{table}

\subsection{Notation and Symbols}
We provide a comprehensive list of all symbols used throughout the paper in Table \ref{tab:symbols}. This table might serve as a reference to help readers easily follow the notation employed in our work.

\begin{table}
\caption{Symbols and their descriptions used throughout the paper.}
\label{tab:symbols}
\centering
\begin{tabular}{rl}\toprule
Symbol & Description \\\midrule
$\Sigma$ & Genetic alphabet\\
$\mathcal{R}$ & Set of reads\\
$\mathcal{E}(\cdot)$ & Embedding function\\
$\mathbf{r}$ \& $\mathbf{q}$ & Read \\
$\ell$ & Read length \\
$k$ & Length of $k$-mers \\
$\Sigma^k$ & All $k$-mers of length $k$\\
$y_{ij}$ & Binary variable showing if the pair is a positive sample\\
$p_{ij}$ & Success probability of a positive pair \\
$\lambda_{ij}$, $\lambda_{\mathbf{x},\mathbf{y}}$ & Rate of the Poison distribution\\
$\omega$ & Window size\\
$c_{\mathbf{r}}(\mathbf{x})$ & Number of occurrences of $\mathbf{x}$ in $\mathbf{r}$\\
$o_{\mathbf{x},\mathbf{y}}(\mathbf{x})$ & Number of co-occurrences of  $\mathbf{x}$ and $\mathbf{y}$ within a window of size $\omega$\\
\bottomrule
\end{tabular}
\end{table}


\clearpage
\newpage

\end{document}